%% file: main.tex
\pdfoutput=1
\documentclass{article}

\usepackage[preprint,nonatbib]{neurips_2026}

\input{paper_preamble}

\renewcommand{\acksection}{\section{Acknowledgments}}

\author{%
  Ali Backour \\
  Massachusetts Institute of Technology \\
  Cambridge, MA 02139 \\
  \texttt{abackour@mit.edu} \\
}

\begin{document}
\input{paper_body}

\end{document}

%% file: paper_preamble.tex
\usepackage[utf8]{inputenc}
\usepackage[T1]{fontenc}
\usepackage{hyperref}
\usepackage{url}
\usepackage{booktabs}
\usepackage{amsfonts}
\usepackage{amsmath}
\usepackage{amsthm}
\usepackage{nicefrac}
\usepackage{microtype}
\usepackage{graphicx}
\usepackage{xcolor}
\usepackage{caption}
\newsavebox{\tabcalib}
\newsavebox{\tabladder}

\hypersetup{colorlinks=true, linkcolor=black, citecolor=black, urlcolor=black}

\theoremstyle{plain}
\newtheorem{theorem}{Theorem}
\theoremstyle{remark}
\newtheorem{remark}{Remark}
\theoremstyle{plain}

\title{The Dynamics of Continuous Mixture Collapse in Language Models}

%% file: paper_body.tex
\maketitle

\begin{abstract}
LLMs latent-state reasoning methods replace discrete intermediate tokens with continuous states, such as weighted mixtures of token embeddings, to retain multiple possible reasoning directions rather than committing to one. Yet pretrained language models often fail to preserve these mixtures. We study \emph{why} through a combination of theoretical analysis and controlled empirical investigations on a variety of models. We identify three independent, distinct sources of failure. First, transformer architectures already distort mixture geometry, and training substantially amplifies this effect. Moreover, the failure can occur even if the model transports mixtures perfectly linearly: the softmax readout and autoregressive feedback form a dynamical system that either amplifies small differences until one component of the mixture dominates or contracts different mixtures until they become indistinguishable. We verify this theoretical prediction empirically: the observed transition between contraction and amplification occurs near the theoretical threshold derived by our analysis, and pretrained-model rollouts lie predominantly on the amplifying side. Finally, we generalize to mixtures of many components and show that exact preservation generally requires context-dependent correction, whose required dimensionality can grow with the number of components.
\end{abstract}

\section{Introduction}
\label{sec:intro}

Large language models have achieved remarkable results across a wide range of complex reasoning tasks.
A key technique behind that success is chain-of-thought (CoT)
reasoning~\cite{nye2021workscratchpadsintermediatecomputation,wei2023chainofthoughtpromptingelicitsreasoning,kojima2023largelanguagemodelszeroshot,wang2023selfconsistencyimproveschainthought}, which has the
model work through a problem step by step by generating intermediate reasoning steps in natural
language.
At each step, the model computes a distribution over the vocabulary but commits to a single token, discarding the information carried by all other candidates and preventing it from revisiting them. This commitment makes generation \emph{readable}, but intermediate reasoning need not be written for human consumption. Forcing every step through this discrete linguistic bottleneck may therefore limit the abstract concepts the model can represent and manipulate.

Building on that, several recent works replace discrete intermediate reasoning tokens with continuous reasoning states. \emph{Soft Thinking} constructs each reasoning state as a probability-weighted mixture of vocabulary embeddings,
$
e(p)=\sum_i p_i e_i,
$
and feeds this continuous representation back into the model at the next reasoning step~\cite{zhang2025softthinkingunlockingreasoning}. Other continuous-reasoning methods use different representations: \emph{Coconut} directly feeds the model's last hidden state back as the next input embedding~\cite{hao2026traininglargelanguagemodels}, while \emph{CODI} learns a continuous chain of thought by distilling explicit chain-of-thought reasoning into continuous hidden states~\cite{shen2025codicompressingchainofthoughtcontinuous}. \emph{Mixture of Inputs} keeps the discarded distribution without any training, feeding back a Bayesian posterior over the sampled token and the distribution it came from~\cite{zhuang2025textgenerationdiscretetoken}.

Continuous reasoning is appealing for two main reasons. First, it can be more \textbf{efficient}: continuous states can compress reasoning that would otherwise require many discrete tokens so they can represent several reasoning traces within the same computation~\cite{hao2026traininglargelanguagemodels,shen2025codicompressingchainofthoughtcontinuous,gozeten2026continuouschainthoughtenables}. Second, it enables broader \textbf{exploration}: rather than committing to a single intermediate token, a continuous state can retain multiple possible reasoning directions and propagate them in parallel~\cite{zhang2025softthinkingunlockingreasoning,hao2026traininglargelanguagemodels,gozeten2026continuouschainthoughtenables}. This intuition also has theoretical support: continuous chain-of-thought allows a two-layer transformer to solve graph reachability by maintaining a continuous state of reasoning traces, and later work shows that such continuous states can emerge through gradient-based training~\cite{zhu2025reasoningsuperpositiontheoreticalperspective,zhu2026emergencesuperpositionunveilingtraining}.

However, recent work finds that pretrained models do not reliably preserve these continuous states. Wu et al.\ find that Soft Thinking decoding is dominated by the highest-probability component of the reasoning state, creating a greedy feedback loop that suppresses alternative reasoning paths~\cite{wu2025llmssinglethreadedreasonersdemystifying}. Rizvi-Martel et al.\ similarly find that, in training-free and fine-tuned latent-reasoning settings, continuous state either collapses or is not used, while signs of continuous state appear only in models trained from scratch with latent thoughts~\cite{rizvimartel2026illusionsuperpositionprincipledanalysis}.

In this work, we ask why. We show that collapse has architectural, learned, and dynamical sources, each capable of destroying the mixture on its own. Comparing pretrained models with matched randomly initialized controls across eight models separates distortion induced by the architecture from that added by training. We then show that this problem persists even under perfect linear transport. The softmax and feedback between successive generation steps can themselves destroy the mixture: when the two continuous reasoning components lead to sufficiently different next-step preferences, small imbalances are amplified until one component dominates; when their preferences are similar, different mixtures instead converge and become indistinguishable. Finally, extending the analysis to \(K\) components, we show that preserving a mixture generally requires a correction that adapts to the context, and that the amount of context-dependent information required can grow with the number of components under plausible conditions.

\section{Where Continuous Mixtures Lose Their Geometry}
\label{sec:phenomenon}

Let $e_i\in\mathbb{R}^d$ be the embedding of token $i$. For $p$ on the simplex, the injected continuous state
is $e(p)=\sum_i p_i e_i$. We place it at an interior slot and ask a question after that slot, so the
model must use the injected state to answer. Write $q_i$ for the next-token distribution when component
$i$ occupies the slot as an ordinary token and $q(p)$ for the distribution under the continuous state.

\begin{figure}[t]
\centering
\includegraphics[width=\linewidth]{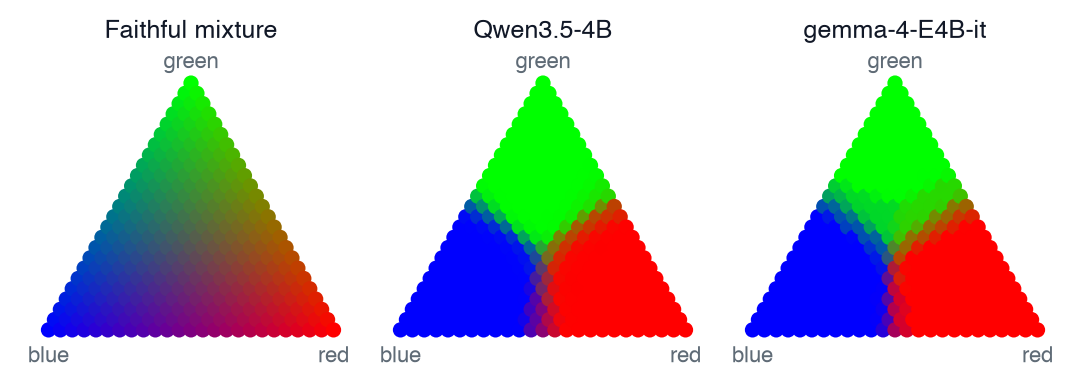}
\caption{\textbf{Three-way mixtures collapse toward individual components.}
Each point represents an injected convex combination of three token embeddings, positioned by its requested mixture weights and colored by the model's probabilities for \textit{red}, \textit{green}, and \textit{blue}. Faithful preservation would reproduce the smooth reference simplex on the left. Qwen3.5-4B (center) and Gemma-4-E4B-it (right) instead map most interior mixtures to near-pure colors, with narrow transition boundaries between components.}
\label{fig:simplex}
\end{figure}

As a motivation, take three words whose colors are obvious and
different (e.g., \emph{blood}, \emph{grass}, \emph{sky}) and build one
vector that is half the first and a quarter each of the other two for instance. Put that single vector in the slot
and ask \emph{What color is \textbf{[\,\_\,]}? Answer with a single word.} If the model treated the
blend as a blend, the answer should be $50\%$ red, $25\%$ green and $25\%$ blue. Averaged over the six
triples and ten rewordings of the question, however, the next token prediction was $99.5\%$ red. Figure~\ref{fig:simplex} repeats this across a lattice of 325 three-way mixtures. In both models, almost the entire interior of the simplex is mapped to one near-pure component, with abrupt transitions across narrow boundaries.

To separate distortion caused by the transformer architecture itself from distortion induced by
training, we compare each pretrained model with five randomly initialized networks of exactly the
same architecture. The untrained models therefore retain the same nonlinear blocks, normalization,
attention structure, and depth, but remove the effect of the learned weights. We run this experiment on a ladder of models for both Qwen3.5~\cite{qwen35blog} and Gemma~4~\cite{gemmateam2026gemma4technicalreport}.

To measure how well a mixture is preserved, let $h(w) \in \mathbb{R}^d$ be the hidden state produced
by an injected mixture with weight $w$. We define the \textbf{recovered mixture weight}
as

\begin{equation}
\alpha(w)
=
\frac{
\langle h(w)-h(0),\, h(1)-h(0)\rangle
}{
\lVert h(1)-h(0)\rVert^2
}.
\label{eq:alpha}
\end{equation}

This measures where $h(w)$ lies between the two components' hidden states.
Exact mixture preservation gives $\alpha(w)=w$.

We summarize preservation by averaging the absolute difference between the
requested weight $w$ and the recovered weight $\alpha(w)$:

\begin{equation}
\mathrm{CALIB}
=
1-\frac{1}{Z}
\mathbb{E}_{w}
\left[
|\alpha(w)-w|
\right].
\label{eq:calib}
\end{equation}

Here, $Z$ scales the score so that exact preservation gives $\mathrm{CALIB}=1$,
while a hard threshold at $w=0.5$ gives $\mathrm{CALIB}=0$.

\begin{figure}[t]
\centering
\includegraphics[width=\linewidth]{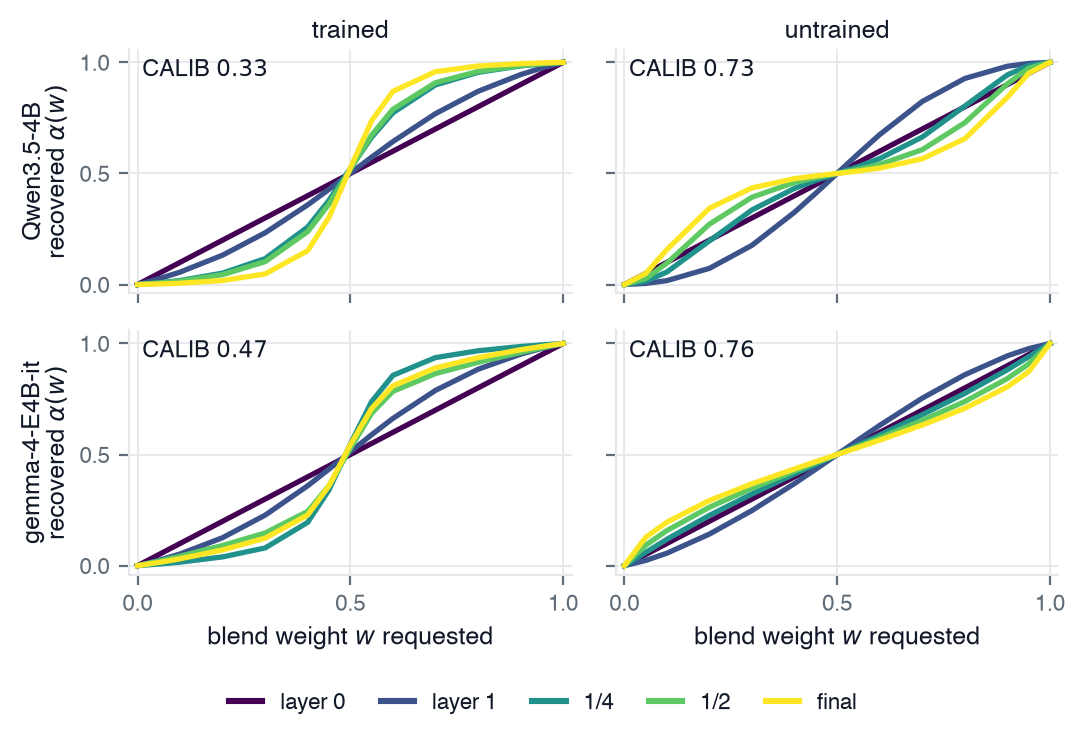}
\caption{\textbf{Training drives mixture collapse through depth.}
Recovered mixture weight $\alpha(w)$ as a function of the requested mixture weight $w$ at five normalized depths. Rows show Qwen3.5-4B and Gemma-4-E4B-it; columns show the trained model and the mean over five randomly initialized controls of the same architecture. The diagonal corresponds to exact mixture preservation. Trained models develop increasingly step-like responses through depth, while the matched controls remain closer to the diagonal. Panel titles report final-layer CALIB, corresponding to Table~\ref{tab:main}. Results use the same 1000-item benchmark described in Appendix~\ref{app:bench}.}

\label{fig:layers}
\end{figure}

\begin{table}[t]
\caption{\textbf{Mixture preservation in trained models and matched untrained controls.}
CALIB (Eq.~\ref{eq:calib}) scores exact preservation of the injected mixture as $1$ and a hard threshold at $w=0.5$ as $0$. Left: CALIB across depth for Qwen3.5-4B and Gemma-4-E4B-it, corresponding to Figure~\ref{fig:layers}. Right: final-layer CALIB across a broader ladder of eight models. Each untrained value is the mean over five random initializations of the same architecture, with standard deviation reported alongside it.}
\label{tab:main}

\sbox{\tabcalib}{\renewcommand{\arraystretch}{1.1837}\input{table_calib}}
\sbox{\tabladder}{\input{table_ladder_mini}}
\par\vspace{\belowcaptionskip}
\noindent\vbox{\hbox to \linewidth{\hfil\usebox{\tabcalib}\hfil\usebox{\tabladder}\hfil}}
\end{table}
Figure~\ref{fig:layers} and Table~\ref{tab:main} show a clear separation between trained models and matched untrained controls. With depth, trained networks increasingly distort the injected mixture, whereas the controls remain substantially closer to linear interpolation. The two families exhibit different depthwise profiles: Qwen accumulates collapse toward the output, while Gemma shows stronger non-monotonic distortion within the stack. Across the broader model ladder, trained models consistently preserve mixtures less faithfully than their controls, and this gap generally widens with scale. Thus, architectural effects alone induce some distortion, but training substantially amplifies it.

\section{Mixture Collapse as a Dynamical System}
\label{sec:stability}

The previous section identified both architectural and training-induced distortion of the mixture.
A natural question is whether removing these effects would be enough to preserve the mixture. We
show that it would not: even if the transformer propagated the mixture \emph{perfectly linearly},
the softmax readout and its autoregressive feedback introduce a separate source of collapse.

Fix a context and let two components $A$ and $B$ produce next-token distributions $q_A$ and $q_B$.
Suppose, optimistically, that their mixed final hidden state is preserved exactly linearly. Since the
output head is linear, the corresponding logits satisfy
\[
z(w)=wz_A+(1-w)z_B.
\]
After applying the softmax,
\begin{equation}
q_w(y)\propto q_A(y)^w q_B(y)^{1-w},
\label{eq:geom}
\end{equation}
so linear interpolation in logit space becomes a weighted \emph{geometric} mixture in probability
space, rather than the arithmetic mixture $wq_A+(1-w)q_B$. Autoregressive decoding then feeds this
distorted mixture back into the model, turning the effect into a recursive dynamical system.

\subsection{Recursive Dynamics of Mixture Propagation}
\label{sec:recursion}

Here, for simplicity, we restrict the vocabulary to two tokens $A$ and $B$, and track
how probability mass moves between them over the rollout.
At step $t$, let $w_t\in[0,1]$ be the mixture weight on $A$, so that $1-w_t$ is the weight on
$B$. It is convenient to recenter this quantity as
\begin{equation}
u_t = 2w_t-1,
\label{eq:u}
\end{equation}
so that $u_t=-1$ corresponds to pure $B$, $u_t=0$ to an equal mixture, and $u_t=+1$ to pure
$A$. The sign of $u_t$ therefore indicates which token has the majority, while $|u_t|$ measures
how strongly the mixture favors one token over the other.

At the current context $c_t$, we evaluate the model $M$ separately on the two pure components,
\[
q^A_t = M(c_t\Vert A),
\qquad
q^B_t = M(c_t\Vert B).
\]
Each branch's log-odds between that pair is
\begin{equation}
\Delta_{A,t}=\log\frac{q^A_t(A)}{q^A_t(B)},
\qquad
\Delta_{B,t}=\log\frac{q^B_t(A)}{q^B_t(B)},
\label{eq:deltas}
\end{equation}
so $\Delta_{A,t}$ measures how strongly branch $A$ prefers its own successor to $B$'s. We write
\begin{equation}
L_t=\frac{\Delta_{A,t}-\Delta_{B,t}}{2},
\qquad
b_t=\frac{\Delta_{A,t}+\Delta_{B,t}}{2}.
\label{eq:Lb}
\end{equation}

$L_t$ is the \emph{coupling}: it measures how differently the two branches rank the competing
descendants. Large $|L_t|$ means their relative preferences differ strongly, while small $|L_t|$
means they make similar relative predictions. In contrast, $b_t$ is the \emph{field}: it measures the common offset in the two branches' log-odds.

Under exact linear logit transport, the next log-odds is
$\ell_{t+1}=w_t\Delta_{A,t}+(1-w_t)\Delta_{B,t}=b_t+L_tu_t$. Using
$2\sigma(x)-1=\tanh(x/2)$ gives
\begin{equation}
u_{t+1}=\tanh\!\left(\frac{b_t+L_tu_t}{2}\right).
\label{eq:cw}
\end{equation}

For fixed $b$ and $L$, Eq.~\ref{eq:cw} is the standard fixed-point iteration associated with the
mean-field Curie--Weiss self-consistency equation~\cite{weiss:jpa-00241247}. The language-model setting is
non-autonomous: $L_t$ and $b_t$ vary as the rollout evolves.

\subsection{The Critical Threshold: Amplification and Washout}

\label{sec:threshold}
We now study what happens to the mixture over time under these dynamics.
To isolate the effect of the coupling, we first set $b_t=0$ and assume
$L_t>0$. Equation~\ref{eq:cw} then becomes

\begin{equation}
u_{t+1}=\tanh\!\left(\frac{L_tu_t}{2}\right).
\label{eq:symmetric-dynamics}
\end{equation}

Take first the constant-coupling case $L_t=L$. The balanced state $u=0$ is always a fixed point,
but its stability changes at $L=2$. Indeed,

\[
\left.\frac{\partial u_{t+1}}{\partial u_t}\right|_{u_t=0}
=
\frac{L}{2}.
\]

Hence $u=0$ is stable for $L<2$ and unstable for $L>2$. When $L>2$, two additional stable
fixed points $\pm u_\ast(L)$ appear, where

\begin{equation}
u_\ast(L)=\tanh\!\left(\frac{Lu_\ast(L)}{2}\right).
\label{eq:cw-fixed-point}
\end{equation}

Starting from any $u_0\neq0$, the sign of the initial majority is preserved and the trajectory
converges to the corresponding polarized fixed point. This is the classical bifurcation of the
mean-field ferromagnet~\cite[Ch.~2]{Friedli_Velenik_2017}. The polarization rapidly approaches a pure state
as $L$ grows: for example, $L=4$ gives $u_\ast\simeq0.96$, while $L=8$ gives
$u_\ast\simeq0.9993$, and $u_\ast(L)\to1$ as $L\to\infty$.

However, the LLM setting is harder than the classical Curie--Weiss case because the context evolves,
so $L_t$ is not fixed. The following two results characterize the corresponding time-varying
dynamics on either side of the critical value.

\begin{theorem}[Uniform coupling implies persistent polarization]
\label{thm:polar}

Let $u_0\neq0$, let $b_t=0$, and suppose $L_t\ge L_{\min}>2$ for every $t$. Let
$u_\ast(L_{\min})\in(0,1)$ be the positive solution of

\[
u=\tanh\!\left(\frac{L_{\min}u}{2}\right).
\]

Then $\operatorname{sign}(u_t)=\operatorname{sign}(u_0)$ for all $t$, and

\begin{equation}
\liminf_{t\to\infty}|u_t|
\ge
u_\ast(L_{\min}).
\label{eq:floor}
\end{equation}

\end{theorem}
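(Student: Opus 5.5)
By symmetry of $\tanh$ (the map $u\mapsto -u$ sends a trajectory to a trajectory with the same $L_t$), it suffices to treat $u_0>0$. The plan has two parts. First, show that positivity is preserved by induction. Second, compare the non-autonomous recursion with the autonomous recursion at the fixed coupling $L_{\min}$, using monotonicity in both $L$ and $u$.

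\textbf{Sign preservation.} Suppose $u_t>0$. Because $L_t>0$, we have $L_tu_t/2>0$, hence $u_{t+1}=\tanh(L_tu_t/2)\in(0,1)$. Induction then gives $u_t\in(0,1)$ for all $t\ge1$ (and $u_0\in(0,1]$). This already establishes $\operatorname{sign}(u_t)=\operatorname{sign}(u_0)$.

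\textbf{Comparison.} Set $f(u)=\tanh(L_{\min}u/2)$. For $u>0$, the function $L\mapsto\tanh(Lu/2)$ is increasing, so $u_{t+1}\ge f(u_t)$. Since $f$ is increasing, an induction comparing with the autonomous orbit $v_{t+1}=f(v_t)$, $v_0=u_0$, shows $u_t\ge v_t$ for all $t$. Indeed, if $u_t\ge v_t$, then $u_{t+1}\ge f(u_t)\ge f(v_t)=v_{t+1}$. It therefore remains to prove $v_t\to u_\ast:=u_\ast(L_{\min})$ for every $v_0\in(0,1]$; then $\liminf u_t\ge\lim v_t=u_\ast$.

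\textbf{Autonomous convergence, the main step.} This step carries the real content. I would use concavity of $f$ on $[0,\infty)$ together with $f(0)=0$ and $f'(0)=L_{\min}/2>1$. The function $g(u)=f(u)-u$ satisfies $g'(0)>0$, $g(1)<0$, and $g$ is strictly concave on $[0,1]$. So $g$ has exactly one zero in $(0,1)$, namely $u_\ast$, with $g>0$ on $(0,u_\ast)$ and $g<0$ on $(u_\ast,1]$. This also justifies that the positive solution named in the statement exists, is unique, and lies in $(0,1)$. Now split into cases:
\begin{itemize}
\item If $v_0\in(0,u_\ast]$, then $v_t$ is nondecreasing, since $f(v)\ge v$ there. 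It is also bounded by $u_\ast$, since $f$ is increasing and $f(u_\ast)=u_\ast$. Hence $v_t$ converges to a fixed point in $(0,u_\ast]$, which must be $u_\ast$.
\item If $v_0\in[u_\ast,1]$, the symmetric argument shows $v_t$ is nonincreasing and bounded below by $u_\ast$, so again $v_t\to u_\ast$.
\end{itemize}
Continuity of $f$ identifies the limit as a fixed point in each case. Combining this with the comparison gives $\liminf_{t\to\infty}|u_t|\ge u_\ast(L_{\min})$.

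Only monotonicity and concavity are used, so no lower bound on the convergence rate is needed. The one subtle point is that $u_t$ is not assumed to be monotone. This is exactly why the proof goes through the comparison orbit $v_t$ rather than analysing $u_t$ directly.
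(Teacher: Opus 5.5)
Your proposal is correct and follows the paper's proof step for step. The shared steps are the odd-symmetry reduction to $u_0>0$, sign preservation by induction, the comparison orbit $v_{t+1}=\tanh(L_{\min}v_t/2)$ (justified by monotonicity in $L$ and then in $u$), and monotone bounded convergence of $v_t$ to $u_\ast(L_{\min})$. Your concavity argument for the uniqueness of $u_\ast$ and the sign pattern of $f(u)-u$, and your handling of the endpoint $v_0=1$, fill in details that the paper only asserts.
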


\begin{theorem}[Subcritical coupling washes out the mixture]
\label{thm:washout}

Suppose

\[
0\le L_t\le L_{\max}<2
\]

along every rollout. Then

\[
|u_T|
\le
\left(\frac{L_{\max}}{2}\right)^T |u_0|
+
\frac12
\sum_{t=0}^{T-1}
\left(\frac{L_{\max}}{2}\right)^{T-1-t}|b_t|.
\]

Consequently, if $b_t = 0$, then $u_t\to0$ independently of the initial mixture.

\end{theorem}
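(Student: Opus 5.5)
The plan is to derive a one-step contraction inequality from the recursion in Eq.~\ref{eq:cw} and then unroll it. The only analytic fact needed is that $\tanh$ is $1$-Lipschitz with $\tanh(0)=0$, so $|\tanh(x)|\le |x|$ for all real $x$. With the general (non-symmetric) update $u_{t+1}=\tanh\!\left(\frac{b_t+L_tu_t}{2}\right)$ this gives
\[
|u_{t+1}| \le \frac{|b_t+L_tu_t|}{2} \le \frac{L_t}{2}|u_t| + \frac{|b_t|}{2} \le \frac{L_{\max}}{2}|u_t| + \frac12|b_t|,
\]
where the middle step is the triangle inequality, and the last step uses $0\le L_t\le L_{\max}$ together with $|u_t|\ge 0$.

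Next, write $\rho=L_{\max}/2\in[0,1)$ and $a_t=|u_t|$. The recursion is then the linear inequality $a_{t+1}\le \rho a_t+\tfrac12|b_t|$. Induction on $T$ gives $a_T\le \rho^T a_0+\tfrac12\sum_{t=0}^{T-1}\rho^{T-1-t}|b_t|$. The base case $T=0$ is trivial (empty sum). For the inductive step, substitute the bound for $a_T$ into $a_{T+1}\le\rho a_T+\tfrac12|b_T|$. Multiplying by $\rho\ge0$ preserves the inequality and shifts each exponent by one, and the new term $\tfrac12|b_T|$ is exactly the $t=T$ summand with exponent $0$. This is the stated bound.

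For the consequence, set $b_t=0$. The sum vanishes, leaving $|u_T|\le\rho^T|u_0|\le\rho^T$, since $u_0\in[-1,1]$. Because $\rho<1$, this tends to $0$ at a geometric rate that does not depend on $u_0$, so $u_t\to0$ for every initial mixture. In fact the convergence is uniform over initial conditions.

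I do not expect a serious obstacle. The one point needing care is that the bound must hold along every rollout, even though $L_t$ and $b_t$ are context-dependent and therefore may depend on the past trajectory. The argument above handles this automatically: it uses only the pointwise hypotheses $0\le L_t\le L_{\max}$ at each step and never assumes $L_t$ or $b_t$ are fixed in advance. Nonnegativity of $L_t$ is used only to write $|L_tu_t|=L_t|u_t|$. One could drop it by assuming $|L_t|\le L_{\max}$ instead, but that is not needed here.
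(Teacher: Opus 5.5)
Your proof is correct and follows the paper's route: $|\tanh(x)|\le|x|$ and the triangle inequality give the one-step bound $|u_{t+1}|\le \frac{L_{\max}}{2}|u_t|+\frac12|b_t|$, which you unroll and then specialize to $b_t=0$. Your version is slightly cleaner than the appendix, which writes $b_{\max}$ in the one-step inequality but iterates with $|b_t|$, and you also make explicit the induction and the fact that the hypotheses are used only pointwise, so trajectory-dependent $L_t,b_t$ are allowed.
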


The proofs are given in Appendix~\ref{app:theory}. Theorem~\ref{thm:polar} follows by a
monotonicity comparison, while Theorem~\ref{thm:washout} follows from a mean-value bound on the
same recursion.

Together, the two results describe complementary failure modes on either side of the critical value.
If the evolving context keeps the coupling uniformly above the critical value, the initial majority
is magnified and the mixture polarizes to an almost pure state for large $L_{\min}$. Below the
critical value, the dynamics instead become contractive and progressively erase distinctions between
different mixtures. In particular, avoiding a vertex is not enough to preserve the mixture: distinct
states such as $70/30$ and $30/70$ become indistinguishable at a geometric rate, and the information
carried by the mixture is lost. Thus the two sides of the threshold destroy information in
complementary ways.

\begin{figure}[t]
\centering
\includegraphics[width=\linewidth]{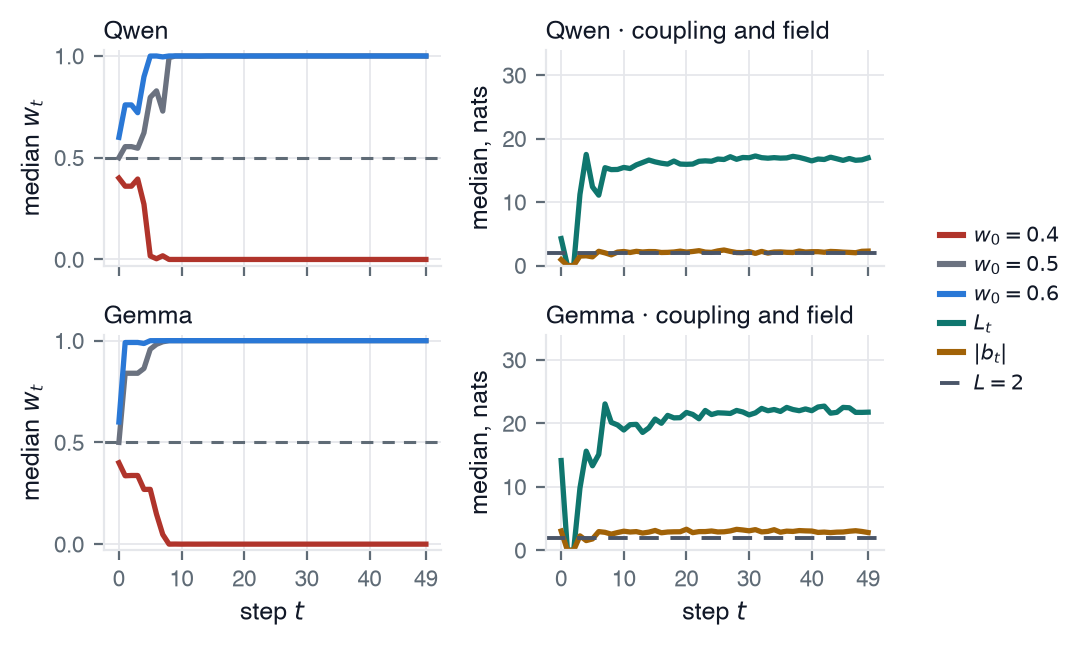}
\caption{\textbf{Recursive softmax reproduces polarization in LLM rollouts.}
The two pure branches evolve under the model, while their mixture is formed only by linearly
interpolating the endpoint logits before the softmax. Rows are the two families, Qwen3.5-4B above
and Gemma-4-E4B-it below. The left column tracks the median mixture weight $w_t$ from each initial
mixture $w_0$; the right column shows both measured coefficients of the recursion along the same
rollouts, the coupling $L_t$ and the field magnitude $|b_t|$. Every curve is a median over the same
$1000$-item benchmark used in Figure~\ref{fig:layers} and described in
Appendix~\ref{app:bench}, rolled out for $50$ steps per item.}
\label{fig:branch}
\end{figure}

\begin{figure}[t]
\centering
\includegraphics[width=\linewidth]{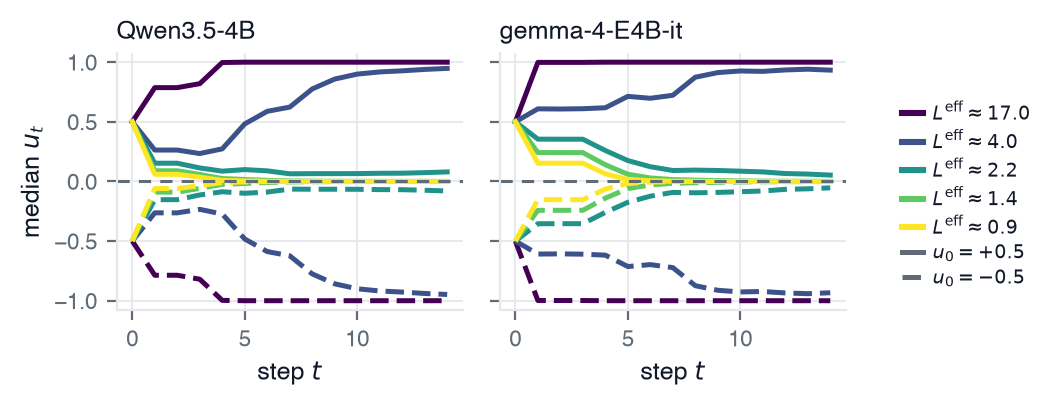}
\caption{\textbf{Reducing the gain replaces one failure mode with the other, in both families.}
A feedback temperature $\tau$ divides the effective coupling, $L^{\mathrm{eff}}_t=L_t/\tau$, so sweeping $\tau$ walks the same measured items from the polarizing regime $L^{\mathrm{eff}}_t>2$ into the contractive one $L^{\mathrm{eff}}_t<2$. Each panel shows the median $u_t$ from two symmetric starts $u_0=\pm0.5$, one panel per family, with color carrying the \emph{median} effective coupling each curve achieves. Well above threshold the two starts separate to opposite vertices; well below it they merge onto the balanced state, and the turn happens around the predicted $L^{\mathrm{eff}}_t=2$. The couplings are the $L_t$ sequences measured in Figure~\ref{fig:branch}, over the same $1000$-item benchmark of Appendix~\ref{app:bench}.}
\label{fig:gain}
\end{figure}

\section{Recursive Softmax in Pretrained Language Models}
\label{sec:llm-dynamics}

The theory above predicts two qualitatively different failure modes depending on the coupling. We
now measure these dynamics directly in pretrained language-model trajectories. To isolate the effect of the recursive
softmax, we run the model only on the two pure branches and form their mixture directly in logit
space, so that mixture transport is exactly linear by construction and the only nonlinearity acting on
the mixture is the softmax.

Following the notation of \S\ref{sec:recursion}, the state at step $t$ is
$(A_t,B_t,w_t,c_t)$, and we evaluate the model on the two pure branches as before,
$q^A_t=M(c_t\Vert A_t)$ and $q^B_t=M(c_t\Vert B_t)$, using their greedy predictions as the
next pair of components,
\begin{equation}
A_{t+1}=\arg\max_y q^A_t(y),
\qquad
B_{t+1}=\arg\max_y q^B_t(y).
\label{eq:branches}
\end{equation}

The two branches therefore evolve normally with the context. We then interpolate only their endpoint
logits,
\[
z_t^{\mathrm{mix}}
=
w_tz_t^A+(1-w_t)z_t^B,
\]
apply the softmax, and define the next mixture weight from the probability assigned by the model to the two new
components under the mixed logits,
\[
w_{t+1}
=
\frac{p_t(A_{t+1})}
{p_t(A_{t+1})+p_t(B_{t+1})}.
\]
and $c_{t+1}$ is the context updated with the new mixture.
Because the endpoints and context evolve at every step, the corresponding $L_t$ and $b_t$ are
re-measured throughout the rollout.

Figure~\ref{fig:branch} tests this prediction on pretrained model trajectories while removing
nonlinear transformer processing from the mixture path. The left panel shows the resulting evolution
of the mixture weight $w_t$: mixtures initialized just above and below $1/2$ rapidly diverge toward
opposite branches. The right panel explains this behavior through the two measured coefficients of
Eq.~\ref{eq:cw}. Across both model families, $L_t$ remains predominantly above the critical value
$L=2$, placing the dynamics in the majority-amplifying regime of \S\ref{sec:threshold}, while $b_t$
is much smaller.

\subsection{Crossing the Critical Threshold}
\label{sec:gain-sweep}

The two regimes on both sides of the critical threshold can be walked between on the measured dynamics. A temperature $\tau$ on the
feedback logits rescales the coupling as $L_t^{\mathrm{eff}}=L_t/\tau$, varying the gain without changing
the model, the items or the branch trajectories, so we sweep $\tau$ over the couplings measured in
Figure~\ref{fig:branch} and start the recursion from two symmetric states $u_0=\pm0.5$.

Figure~\ref{fig:gain} shows the predicted transition in both families: at large effective coupling the
two trajectories separate to opposite branches; at small effective coupling they merge and lose their
initial distinction, and the turn happens around the theoretical value $L^{\mathrm{eff}}=2$ which is exactly what the theory predicts. 
\section{The Cost of Correcting \texorpdfstring{\(K\)}{K}-Way Mixtures}
\label{sec:context}

The analysis above is the special case of mixtures of two tokens. We now generalize to $K$ components. Let
$p\in\operatorname{int}\Delta^{K-1}$ denote their mixture weights and use
log-ratio coordinates
\begin{equation}
    a=\psi(p)\in\mathbb{R}^{K-1},
    \qquad
    \psi_i(p)=\log\frac{p_i}{p_K}.
    \label{eq:logratio}
\end{equation}

Although the textual context $c_t$ is discrete, the model operates on a continuous
representation of it. Let $x_t=x(c_t)\in\mathbb{R}^n$ denote this contextual
representation. One decoding step then induces a map
\begin{equation}
    a_{t+1}=\Phi(x_t,a_t),
    \label{eq:phi}
\end{equation}
where the contextual representation and candidate identities may change during the rollout;
the binary linear-logit reduction recovers Eq.~\ref{eq:cw}.

If $a_t$ represents the proportions of unresolved components, the ideal behavior
is that one reasoning step leaves them unchanged,
\begin{equation}
    a_{t+1}=a_t.
    \label{eq:faithful}
\end{equation}

The previous sections establish, theoretically and empirically, that in general
$\Phi(x_t,a_t)\neq a_t$. Moreover, the distortion depends on the context, so a
fixed transformation of $a_t$ cannot in general undo it. Suppose instead that,
before feeding the mixture into the model, we apply a corrector
\begin{equation}
    \widetilde a_t
    =
    f\!\left(a_t,r(x_t)\right),
    \label{eq:corrector}
\end{equation}
where $r(x_t)\in\mathbb{R}^m$ is context-dependent information supplied to the
corrector. Faithful propagation requires
\begin{equation}
    \Phi\!\left(
        x_t,
        f(a_t,r(x_t))
    \right)
    =
    a_t.
    \label{eq:corrected-faithful}
\end{equation}

We now ask how much context-dependent information such a corrector may require.
Fix a single interior target state $\bar a$ and require only that the corrector
keep this one state fixed under nearby perturbations of the continuous contextual
representation:
\begin{equation}
    \Phi\!\left(
        x,
        f(\bar a,r(x))
    \right)
    =
    \bar a
    \qquad
    \text{for every }x\text{ near }x_0.
    \label{eq:anchor}
\end{equation}

Even this weak requirement can demand a number of context-dependent quantities
proportional to the number of components.

\begin{theorem}[Context-dimension lower bound]
\label{thm:drift}

Assume that $\Phi$, $f$, and $r$ are continuously differentiable and that
Eq.~\ref{eq:anchor} holds. Then
\begin{equation}
    m
    \ge
    \operatorname{rank}
    D_x\Phi\!\left(
        x_0,
        f(\bar a,r(x_0))
    \right).
    \label{eq:context-rank}
\end{equation}

In particular, if local variations of the continuous contextual representation
move the output independently in all $K-1$ mixture directions, then
\begin{equation}
    m\ge K-1.
\end{equation}
\end{theorem}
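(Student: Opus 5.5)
Differentiate the anchor identity in Eq.~\ref{eq:anchor} with respect to $x$ at $x_0$ and read off a rank inequality. Define $G(x)=\Phi\bigl(x,f(\bar a,r(x))\bigr)$. By Eq.~\ref{eq:anchor}, $G$ is constant, equal to $\bar a$, on a neighbourhood of $x_0$. Since $\Phi$, $f$, $r$ are $C^1$, the chain rule applies, and the derivative of $G$ vanishes at $x_0$:
\[
0 = D_xG(x_0) = D_x\Phi(x_0,\widetilde a_0) + D_a\Phi(x_0,\widetilde a_0)\,D_rf(\bar a,r(x_0))\,Dr(x_0),
\]
where $\widetilde a_0=f(\bar a,r(x_0))$. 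Here $D_x\Phi$ denotes the partial derivative of $\Phi$ in its first argument, which is an $(K-1)\times n$ matrix.

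The key step is rearranging this to
\[
D_x\Phi(x_0,\widetilde a_0) = -\,D_a\Phi\,D_rf\,Dr(x_0).
\]
The right-hand side is a product whose last factor, $Dr(x_0)$, is an $m\times n$ matrix, and whose middle factor $D_rf$ is $(K-1)\times m$. A matrix product has rank at most the minimum of the ranks of its factors, and any factor with an inner dimension $m$ has rank at most $m$. Hence
\[
\operatorname{rank} D_x\Phi(x_0,\widetilde a_0) \le \operatorname{rank} D_rf \le m,
\]
which is Eq.~\ref{eq:context-rank}. Put simply, the corrector can only cancel context-induced drift through the $m$-dimensional channel $r$, so the drift it cancels spans at most $m$ directions.

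For the ``in particular'' clause, I will read the hypothesis that context variations move the output independently in all $K-1$ mixture directions as the statement that $D_x\Phi(x_0,\widetilde a_0)$ is surjective onto $\mathbb{R}^{K-1}$, that is, has rank $K-1$. Substituting this into the inequality gives $m\ge K-1$.

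The argument is essentially routine. The main care point is making sure the chain rule is applied to the partial derivative $D_x\Phi$ evaluated at the corrected point $\widetilde a_0$, not at $\bar a$. I also need the anchor to hold on a whole open neighbourhood of $x_0$, which is what lets me differentiate; pointwise equality at $x_0$ alone would not suffice.
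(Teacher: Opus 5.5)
Your proof is correct and follows the paper's argument: the composition $x\mapsto\Phi(x,f(\bar a,r(x)))$ is constant near $x_0$, so the chain-rule identity $D_x\Phi=-D_a\Phi\,D_rf\,D_xr$ bounds $\operatorname{rank} D_x\Phi$ by $m$. The only cosmetic difference is that you bound the rank through the $(K-1)\times m$ factor $D_rf$, while the paper uses the $m\times n$ factor $D_xr$; both give the same inequality.
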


The proof is a chain-rule argument on the anchoring condition
(Appendix~\ref{app:theory}). The rank of this Jacobian need not be small: local changes in the contextual representation can affect the final hidden state along many independent directions. As a result, faithfully correcting a mixture as it passes through a pretrained model generally requires a non-trivial dependence on the context, and the amount of context-dependent information needed can itself be large.
\section{Discussion and Implications}
\label{sec:discussion}

Taken together, our experiments and theory give a unified account of why continuous mixtures are difficult
to preserve. For each failure mode, we identify a corresponding mechanism and test its consequences:
the transformer architecture already distorts mixtures, training substantially amplifies this distortion,
and even under idealized linear transport the softmax feedback dynamics can either amplify or erase
differences between components. The rollout experiments reproduce the behavior predicted by this
dynamical analysis, suggesting that these are not merely artifacts of the representation-level probes.
This also explains why a simple fix such as globally reweighting or softening the mixture is unlikely to
be sufficient. The required correction changes with the context, and our $K$-way analysis shows that,
under full-rank context variation, exact local preservation can require at least $K-1$ context-dependent
degrees of freedom. Thus, preserving mixtures is not just a matter of choosing better weights, but of
actively compensating for context-dependent dynamics throughout the rollout.

\paragraph{Relation to prior work.}
Continuous- and latent-reasoning methods show that continuous computation can be useful and, trained appropriately, can carry multiple reasoning traces~\cite{hao2026traininglargelanguagemodels,shen2025codicompressingchainofthoughtcontinuous,zhang2025softthinkingunlockingreasoning,zhu2025reasoningsuperpositiontheoreticalperspective,zhu2026emergencesuperpositionunveilingtraining,gozeten2026continuouschainthoughtenables}, while recent empirical work finds pretrained models collapsing or ignoring such continuous states~\cite{wu2025llmssinglethreadedreasonersdemystifying,rizvimartel2026illusionsuperpositionprincipledanalysis}. Our results connect the two: continuity alone does not preserve continuous states, and what decides the outcome is whether the representation, the readout and the closed-loop dynamics together preserve the semantics assigned to the continuous state. The positive and negative results describe different dynamical regimes of one problem. Training-free remedies already target the failure from this direction. \emph{SeLaR} gates soft embeddings on entropy and adds a contrastive term that pushes the blend away from the dominant token~\cite{fu2026selarselectivelatentreasoning}, and \emph{Mixture of Inputs} reweights what is fed back~\cite{zhuang2025textgenerationdiscretetoken}, which in the language of \S\ref{sec:threshold} is an attempt to hold the effective coupling down without falling into the contractive regime, and Theorem~\ref{thm:washout} says that is the trade such a term has to manage.

\paragraph{Future directions.}
Each cause suggests its own intervention. Architectural and training-induced distortion could be addressed by training explicitly on mixed or latent states, so that mixture geometry survives the network~\cite{hao2026traininglargelanguagemodels,shen2025codicompressingchainofthoughtcontinuous,butt2025softtokenshardtruths,zheng2026softgrposurpassingdiscretetokenllm}; supervising a latent directly as an average of the embeddings it is meant to stand for, or confining it to the vocabulary space, are two ways of doing that already in
use~\cite{yerram2026trainingcontinuouschainthought,deng2026llmlatentreasoningchain}.
The recursive softmax failure instead suggests bypassing the vocabulary readout altogether:
rather than projecting a hidden state to logits, one can feed the continuous hidden state directly
into the next step. Coconut takes precisely this route, using the previous final hidden state as the
next input embedding~\cite{hao2026traininglargelanguagemodels}; its gains are nevertheless task-dependent, suggesting that
removing the softmax feedback loop addresses one source of collapse but is not by itself sufficient
for robust continuous reasoning.
And because the required correction moves with the context, a general $K$-way solution likely needs a context-conditioned controller whose capacity grows with the number of components. Learning and testing such neutral dynamics is the natural next step toward systems that preserve, rather than merely encode, multiple possibilities.

\section{Limitations and Broader Impact}
Our experiments use controlled embedding mixtures as an operational notion of preserving multiple
components; this may not capture every form of semantic continuous state in continuous-reasoning
methods. The empirical analysis is primarily binary and covers two model families, while the $K$-way
result is theoretical and its full-rank condition is not measured directly. Matched random controls
isolate effects associated with learned weights but do not identify which aspects of training cause the
distortion, and we study mixture preservation rather than downstream performance in a fully trained
continuous-reasoning system.

This work is primarily diagnostic. Better understanding and controlling mixture collapse may improve
the reliability of latent-reasoning systems, but stronger latent reasoning can also make intermediate
computation less human-readable. This increases the importance of developing monitoring and
interpretability methods alongside improvements in continuous reasoning.

\begin{ack}
We thank Maxwell Sun, Beshr Islam Bouli, Seb Losada and Nour Massri for helpful discussions and feedback.
\end{ack}

{\small
\bibliographystyle{plain}
\bibliography{references}
}

\appendix

\input{appendix_bench}

\input{appendix_theory}

%% file: table_calib.tex
\centering
\scriptsize
\setlength{\tabcolsep}{2pt}
\begin{tabular}{lcccc}
\toprule
& \multicolumn{2}{c}{Qwen3.5-4B} & \multicolumn{2}{c}{gemma-4-E4B-it} \\
\cmidrule(lr){2-3}\cmidrule(lr){4-5}
Depth & trained & untr. & trained & untr. \\
\midrule
$0\%$ (control) & 1.000 & $1.000_{\pm0.000}$ & 1.000 & $1.000_{\pm0.000}$ \\
layer 1 & \textbf{0.820} & $\mathbf{0.696_{\pm0.002}}$ & \textbf{0.796} & $\mathbf{0.799_{\pm0.001}}$ \\
$12.5\%$ & 0.678 & $0.837_{\pm0.002}$ & 0.651 & $0.854_{\pm0.002}$ \\
$25\%$ & 0.507 & $0.873_{\pm0.001}$ & 0.385 & $0.860_{\pm0.002}$ \\
$50\%$ & 0.464 & $0.824_{\pm0.002}$ & 0.539 & $0.825_{\pm0.003}$ \\
$75\%$ & 0.391 & $0.774_{\pm0.002}$ & 0.461 & $0.790_{\pm0.004}$ \\
$100\%$ (final) & \textbf{0.332} & $\mathbf{0.732_{\pm0.003}}$ & \textbf{0.475} & $\mathbf{0.765_{\pm0.003}}$ \\
\bottomrule
\end{tabular}

%% file: table_ladder_mini.tex
\centering
\scriptsize
\setlength{\tabcolsep}{3pt}
\begin{tabular}{lcc}
\toprule
& \multicolumn{2}{c}{final CALIB} \\
\cmidrule(lr){2-3}
Model & trained & untr. \\
\midrule
Qwen3.5-0.8B & \textbf{0.632} & $0.818_{\pm0.002}$ \\
Qwen3.5-2B & \textbf{0.506} & $0.780_{\pm0.004}$ \\
Qwen3.5-4B & \textbf{0.332} & $0.732_{\pm0.003}$ \\
Qwen3.5-9B & \textbf{0.249} & $0.730_{\pm0.005}$ \\
Qwen3.5-27B & \textbf{0.130} & $0.611_{\pm0.004}$ \\
\midrule
gemma-4-E4B-it & \textbf{0.475} & $0.765_{\pm0.003}$ \\
gemma-4-12B-it & \textbf{0.227} & $0.581_{\pm0.005}$ \\
gemma-4-31B-it & \textbf{0.160} & $0.569_{\pm0.003}$ \\
\bottomrule
\end{tabular}

%% file: appendix_bench.tex
\section{Benchmark Construction}
\label{app:bench}

We construct a benchmark of 1000 binary mixture items spanning seven semantic categories:
color, temperature, size, speed, hardness, weight, and brightness. Each item consists of a prompt
with one injection slot, two component words, and two corresponding answer words. At mixture
weight $p$, the slot is filled with
\[
e(p)=p\,e_1+(1-p)\,e_2,
\]
where $e_1$ and $e_2$ are the component embeddings. We then measure the model's relative
probability assigned to the two answer words at the next-token position. Thus, $p=1$ and $p=0$
recover the two ordinary-token endpoints, while intermediate $p$ values test whether the model
preserves the injected mixture.

\paragraph{Item selection.}
An item is included only if (i) both component words and both answer words are single tokens
under both model families, and (ii) at each endpoint the trained model assigns at least $0.5$
probability to the corresponding answer word. The second condition ensures that the underlying
question is well posed before testing mixtures. The surviving items are intersected across model
families, yielding one shared 1000-item benchmark used for all trained models and their matched
randomly initialized controls.

\paragraph{Prompting.}
We use a chat template that closes the reasoning block before the answer position, so that the
next-token probability is concentrated on the requested one-word answer rather than on intermediate
reasoning text.

Table~\ref{tab:examples} shows representative benchmark items.

\input{table_examples}

%% file: table_examples.tex
\begin{table}[t]
\caption{\textbf{Ten of the 1000 benchmark items.} One per category, then a second pass
in the same order; every category the benchmark contains appears before any
repeats. \textbf{Prompt} is the template with the injection slot marked
\textbf{[\,\_\,]}; the slot is filled with the convex combination
$p\,e(\text{first})+(1-p)\,e(\text{second})$ of the two \textbf{components}'
embeddings, never with a real token. \textbf{Answers} are the two words whose
probability ratio is the readout: with $p=1$ the model should say the first, with
$p=0$ the second, and the question is what it does in between. Every item is
single-token under both tokenizers and clears the $0.5$ answer-mass gate in both
families.}
\label{tab:examples}
\centering
\footnotesize
\setlength{\tabcolsep}{4pt}
\begin{tabular}{@{}p{1.6cm}p{6.9cm}p{2.5cm}p{2.0cm}@{}}
\toprule
Category & Prompt & Components & Answers \\
\midrule
brightness & \texttt{State whether \textbf{[\,\_\,]} is bright or dark, in a single word.} & sun / pit & bright / dark \\
color & \texttt{Name the color of \textbf{[\,\_\,]}. One word only.} & coal / sky & black / blue \\
hardness & \texttt{ \textbf{[\,\_\,]} is hard or soft? Reply with one word.} & granite / cotton & hard / soft \\
size & \texttt{State whether \textbf{[\,\_\,]} is big or small, in a single word.} & whale / atom & big / small \\
speed & \texttt{In one word, is \textbf{[\,\_\,]} fast or slow?} & rocket / turtle & fast / slow \\
temperature & \texttt{ \textbf{[\,\_\,]} is hot or cold? Reply with one word.} & lava / fridge & hot / cold \\
weight & \texttt{Is \textbf{[\,\_\,]} heavy or light? Answer with a single word.} & lead / feather & heavy / light \\
brightness & \texttt{Is \textbf{[\,\_\,]} bright or dark? Answer with a single word.} & sun / void & bright / dark \\
color & \texttt{In one word, what color is \textbf{[\,\_\,]}?} & ink / ocean & black / blue \\
hardness & \texttt{Give the hardness of \textbf{[\,\_\,]} using only one word.} & granite / sponge & hard / soft \\
\bottomrule
\end{tabular}
\end{table}

%% file: appendix_theory.tex
\section{Proofs}
\label{app:theory}

\subsection*{Proof of Theorem~\ref{thm:polar} (uniform coupling implies persistent polarization)}

\begin{proof}
Write $F_L(u)=\tanh(Lu/2)$. For $u>0$ it is increasing in $u$ and in $L$, and $F_L(-u)=-F_L(u)$, so
without loss of generality take $u_0>0$. Since every $F_{L_t}$ maps $(0,1)$ into itself, $u_t>0$ for all
$t$ and the sign is preserved.

Put $v_0=u_0$ and $v_{t+1}=F_{L_{\min}}(v_t)$, the constant-coupling system at the smallest coupling
encountered. If $u_t\ge v_t$ then $u_{t+1}=F_{L_t}(u_t)\ge F_{L_{\min}}(u_t)\ge F_{L_{\min}}(v_t)=v_{t+1}$,
using monotonicity in $L$ then in $u$; by induction $u_t\ge v_t$ for every $t$.

For $L_{\min}>2$ the map $F_{L_{\min}}$ has exactly one positive fixed point $u_\ast(L_{\min})$, and it is
attracting from any $v_0\in(0,1)$: on $(0,u_\ast)$ we have $F_{L_{\min}}(v)>v$ and on $(u_\ast,1)$ we have
$F_{L_{\min}}(v)<v$, so $v_t$ is monotone and bounded, hence convergent, and its limit is a fixed point,
which can only be $u_\ast$. Therefore $\liminf_t u_t\ge\lim_t v_t=u_\ast(L_{\min})$.
\end{proof}

\subsection*{Proof of Theorem~\ref{thm:washout} (subcritical coupling washes out the mixture)}

\begin{proof}

We have
\[
u_{t+1}
=
\tanh\!\left(\frac{L_tu_t + b_t}{2}\right).
\]

Using $|\tanh(x)|\le |x|$ and $0\le L_t\le L_{\max}<2$,
\[
|u_{t+1}|
\le
\frac{L_t}{2}|u_t| + \frac{|b_t|}{2}
\le
\frac{L_{\max}}{2}|u_t| + \frac{b_{\max}}{2}.
\]

Iterating this inequality gives
\[
|u_T|
\le
\left(\frac{L_{\max}}{2}\right)^T |u_0|
+
\frac12
\sum_{t=0}^{T-1}
\left(\frac{L_{\max}}{2}\right)^{T-1-t}|b_t|
\]
When $b_t = 0$, this reduces to
\[
|u_T|
\le
\left(\frac{L_{\max}}{2}\right)^T |u_0|
\]

Since
\[
\frac{L_{\max}}{2}<1,
\]
the right-hand side converges to zero as $T\to\infty$. Therefore
\[
u_T\to0
\]
independently of the initial mixture.

\end{proof}

\begin{remark}[Persistent nonzero field]

The theorem shows that the contribution of the initial mixture decays
geometrically under subcritical coupling. A persistent context-dependent
field, however, can sustain a nonzero residual state.

Indeed, if $|b_t|\le b_{\max}$ for all $t$, then, writing
\[
\rho=\frac{L_{\max}}{2}<1,
\]
we have
\[
|u_T|
\le
\rho^T|u_0|
+
\frac{b_{\max}}{2}
\sum_{t=0}^{T-1}\rho^{T-1-t}.
\]

Since
\[
\sum_{t=0}^{T-1}\rho^{T-1-t}
=
\frac{1-\rho^T}{1-\rho},
\]
we obtain
\[
|u_T|
\le
\rho^T|u_0|
+
\frac{b_{\max}}{2}\frac{1-\rho^T}{1-\rho}.
\]

Taking $T\to\infty$ yields
\[
\limsup_{T\to\infty}|u_T|
\le
\frac{b_{\max}}{2(1-\rho)}
=
\frac{b_{\max}}{2-L_{\max}}.
\]

Hence a persistent field can prevent convergence to the balanced state,
but under subcritical coupling its influence remains bounded by the
strength of the field relative to the distance from the critical
threshold.

\end{remark}

\subsection*{Proof of Theorem~\ref{thm:drift} (context-dimension lower bound)}

\begin{proof}
Write $\widetilde a(x)=f\!\left(\bar a,r(x)\right)$ for the corrected mixture at
contextual representation $x$, and define
\[
    F(x)
    =
    \Phi\!\left(x,\widetilde a(x)\right).
\]
Eq.~\ref{eq:anchor} says exactly that $F(x)=\bar a$ for every $x$ in a
neighborhood of $x_0$. A map that is constant on an open set has vanishing
derivative there, so $DF(x_0)=0$.

Throughout, $D_x\Phi$ and $D_a\Phi$ denote the partial derivatives of $\Phi$ with
respect to its first and second arguments, matching the notation of
Eq.~\ref{eq:context-rank}, and every derivative below is evaluated at the anchor
point $\left(x_0,\widetilde a(x_0)\right)=\left(x_0,f(\bar a,r(x_0))\right)$. Note
that $x$ enters $F$ through both arguments of $\Phi$, so the two contributions must
be summed. Since $\Phi$, $f$ and $r$ are continuously differentiable, the chain
rule gives
\[
    0
    =
    DF(x_0)
    =
    D_x\Phi
    +
    D_a\Phi\,
    D_rf\,
    D_xr ,
\]
and therefore
\[
    D_x\Phi
    =
    -\,D_a\Phi\,D_rf\,D_xr .
\]
The right-hand side factors through $D_xr$, and $r$ takes values in
$\mathbb{R}^m$, so $\operatorname{rank}(D_xr)\le m$. A product's rank is at most
the rank of any factor, hence
\[
    \operatorname{rank}\!\left(D_x\Phi\right)
    \le
    \operatorname{rank}\!\left(D_xr\right)
    \le m ,
\]
which is Eq.~\ref{eq:context-rank}. If in addition local variations of the
contextual representation move the output independently in all $K-1$ mixture
directions --- that is, if $D_x\Phi$ has full rank $K-1$ --- then $m\ge K-1$.
\end{proof}